\def\BibTeX{{\rm B\kern-.05em{\sc i\kern-.025em b}\kern-.08em
    T\kern-.1667em\lower.7ex\hbox{E}\kern-.125emX}}
\newtheorem{theorem}{Theorem}
\begin{document}

\title{Prospect Theory-inspired Automated P2P Energy Trading with Q-learning-based Dynamic Pricing\\}

\author{\IEEEauthorblockN{Ashutosh Timilsina}
\IEEEauthorblockA{\textit{Department of Computer Science} \\
\textit{University of Kentucky}\\
Lexington, USA \\
ashutosh.timilsina@uky.edu}
\and
\IEEEauthorblockN{Simone Silvestri}
\IEEEauthorblockA{\textit{Department of Computer Science} \\
\textit{University of Kentucky}\\
Lexington, USA \\
simone.silvestri@uky.edu}
}
\onecolumn
\maketitle

\begin{abstract}
The widespread adoption of distributed energy resources, and the advent of smart grid technologies, have allowed traditionally passive power system users to become actively involved in energy trading. Recognizing the fact that the traditional centralized grid-driven energy markets offer minimal profitability to these users, recent research has shifted focus towards decentralized {\em peer-to-peer (P2P) energy} markets. In these markets, users trade energy with each other, with higher benefits than buying or selling to the grid. However, most researches in P2P energy trading largely overlook the user perception in the trading process, assuming constant availability, participation, and full compliance. As a result, these approaches may result in negative attitudes and reduced engagement over time. In this paper, we design an {\em automated} P2P energy market that takes user perception into account. We employ {\em prospect theory} to model the user perception and formulate an optimization framework to maximize the buyer's perception while matching demand and production. Given the non-linear and non-convex nature of the optimization problem, we propose Differential Evolution-based Algorithm for Trading Energy called $DEbATE$. Additionally, we introduce a risk-sensitive Q-learning algorithm, named Pricing mechanism with Q-learning and Risk-sensitivity ($PQR$), which learns the optimal price for sellers considering their perceived utility. Results based on real traces of energy consumption and production, as well as realistic prospect theory functions, show that our approach achieves a $26\%$ higher perceived value for buyers and generates $7\%$ more reward for sellers, compared to a recent state of the art approach. 


\end{abstract}

\begin{IEEEkeywords}
Peer-to-peer energy trading, differential evolution, dynamic pricing, prosumer, prospect theory, Q-learning.
\end{IEEEkeywords}


\section{Introduction}
Distributed Energy Resources (DER), such as rooftop solar and wind turbine, have seen widespread proliferation among  consumers in recent years\cite{IEA}. In addition, the advent of Smart Grid (SG) technologies, Advanced Metering Infrastructures (AMI), and home energy management systems, have added flexibility in energy generation/consumption for consumers. This, in turn, has allowed traditionally passive consumers to become actively involved in energy trading by sharing the excess energy generated at their premise to either grid or other buyers \cite{timilsina2021reinforcement,Parag2016prosumer_era}. These active consumers with energy production capabilities have been referred to as \textit{prosumers} \cite{Parag2016prosumer_era}, as a portmanteau of ``producers'' and ``consumers''. The role of prosumers in energy market has been recognized to some extent with the adoption of incentive schemes like \textit{Feed-in-Tariff} (FiT) mechanism \cite{tushar2020peer, tushar2018peer}. FiT allows prosumers to sell excess energy to the grid and buy from grid when required \cite{tushar2018peer}. However, existing  energy trading modalities offer limited  benefits to participating prosumers. This is due to the minimal prices at which energy is purchased by grid, as well as the low limits on the amount of energy that can be purchased \cite{tushar2020peer,tushar2018peer,Parag2016prosumer_era}.


\subsection{Literature Review and Motivation}
{\em Peer-to-peer (P2P) energy trading} is a recently proposed decentralized modality for energy sharing aiming at solving limitations of centralized techniques. This modality has been gaining significant traction recently \cite{tushar2018peer,tushar2020peer}. 
Specifically, P2P energy trading allows prosumers to trade energy among each other at a negotiated price with or without the involvement of the grid \cite{tushar2020peer}. It generates better monetary incentives for prosumers compared to existing mechanisms while also reducing their grid dependency \cite{tushar2018peer}. 
Additionally, increased local energy generation/consumption resulting from P2P trading leads to the minimization of overall system energy loss while providing an effective way to achieve demand side management\cite{Zhu2013SmartMicrogrids}. Benefits extend also to the grid operator, by providing 
savings in investments that would have been otherwise required to develop/maintain transmission infrastructure in a centralized power distribution architecture \cite{Parag2016prosumer_era,tushar2020peer}. 


P2P energy trading has received attention from the research community in recent years. The works in \cite{tushar2018transforming,tushar2019grid} present game theoretic approaches in a P2P setting, while a greedy rule-based P2P mechanism to assign energy among prosumers is proposed in \cite{azim2019feasibility} that includes mid-market pricing. Similarly, the physical aspects of P2P energy trading, such as power loss minimization and voltage regulation,  have been explored in \cite{nasimifar2019peer,paudel2020peer}.
These works, however, largely overlook the user behavior in designing their solutions. As established in \cite{Parag2016prosumer_era,tushar2018peer,timilsina2021reinforcement}, accommodating the user behavioral modeling in P2P energy trading ensures sustained participation from prosumers while incentivizing their contribution. In fact, the papers \cite{tushar2018transforming,tushar2019grid} consider prosumers to be actively involved and fully compliant with the system as rational decision-makers. First concern with this assumption is that the continuous online presence of participating prosumers with the system might not always be possible in real-world application. Secondly, research on user behavioral models and decision making \cite{gigerenzer2002bounded,agosto2002bounded} have found users to have \textit{bounded rationality}.
Therefore, requiring constant active participation  overwhelms the users and incentivizes non-rational decisions \cite{earl2016bounded}. In the worst case, it might even result in users opting to terminate their participation altogether \cite{agosto2002bounded,timilsina2021reinforcement}. 
In that light, the works in \cite{timilsina2021reinforcement,agate2020enabling} incorporates  bounded rationality and user preferences into P2P energy trading. However, it requires continuous human participation and assumes a simplistic linear model for user perception. Conversely, the authors of  \cite{tushar2018peer} limit their focus on coalition formation in game theoretic setting and do not explicitly consider user behavioral modeling. 


As a result, a prosumer-centric P2P energy trading model, that effectively incorporates the prosumers' decision-making behavior and their perceived loss/gain value from trading, is still lacking in the existing literature. Such a trading modality is expected to require minimal active participation from users while also ensuring their sustained involvement through the adoption of user behavioral modeling.
To this end, the framework of \textit{Prospect Theory} (PT) \cite{kahneman2013prospect} can be used to model the non-rational user behavior in the face of uncertain decision-making. It is often regarded as fairly accurate mathematical representation of human behavior 
\cite{kahneman2013prospect,el2016prospect, el2017managing}. 

Recently, there has been few efforts in integrating PT in energy related applications as well to capture the irrationality of users \cite{saad2016toward,el2017managing,wang2020prospect,yao2021distributed}. In relation to P2P energy trading, the authors in \cite{yao2021distributed} have proposed a PT-based distributed energy trading model to optimize trading decisions for prosumers in a competitive market. 
Although these papers model the user behavior in some ways, they require active participation from users and also assume that such behavior (e.g., the parameters of PT) is homogeneous for all the users. Social science studies, such as the one conducted in Italy \cite{contu2016modeling} to investigate the social acceptance of nuclear energy using an online survey, show that users exhibit significant heterogeneity in their preferences for the sources of energy. Neuroscience studies have also stressed the heterogeneity of humans in reference to PT parameters \cite{fox2009prospect}. Not capturing such heterogeneity provides little benefits in terms of user behavioral modeling. 



\subsection{Paper Contributions}

In this paper, we design a PT-based optimization framework for prosumer-centric P2P energy trading as shown in Fig. \ref{fig:system_overview_PT}. The framework aims at matching energy production and consumption (step $1$ in Fig. \ref{fig:system_overview_PT}) to maximize the perceived utility of individual buyers while taking into account the intrinsic heterogeneity of human perception. Given that the optimization problem is non-linear and non-convex,  we further devise a \textit{Differential Evolution}-based \cite{storn1997differential} metaheuristic algorithm called $DEbATE$ to solve the problem (\textit{energy allocation}, step $2$). In order to ensure minimal active participation of prosumers, we employ a Reinforcement Learning (RL) framework, called $PQR$, in tandem with $DEbATE$ to automate the pricing mechanism for sellers (\textit{pricing mechanism}, step $3$). In doing so, $PQR$ learns the  selling price for each sellers using a PT-based risk-sensitive Q-learning algorithm \cite{shen2014risk}. The output of the algorithms is then returned to the prosumers for executing the physical energy transactions (step $4$). 
Using real datasets for energy production and consumption, paired with recent survey data for PT perception modeling, results show that $DEbATE$ performs $25\%$ higher in buyer's perception and $7\%$ higher in seller's reward compared to state-of-the-art approach.



The major contributions of the paper are the following:
\begin{itemize}
    \item We develop a PT-inspired optimization framework for P2P energy trading;
    
    \item We design a metaheuristic algorithm $DEbATE$ to solve the non-linear energy allocation problem;
    
    \item We design dynamic pricing mechanism with $PQR$ algorithm using risk-sensitive Q-learning approach; 
    
    \item Experiments using real data show the superiority of proposed approach compared to the state-of-the-art;
\end{itemize}

 \begin{figure}[!thb]
 \includegraphics[width=.96\linewidth]{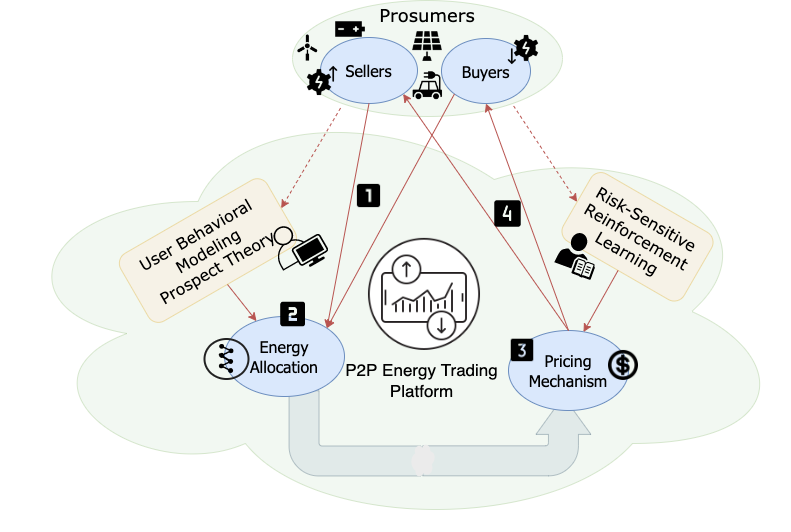}
 \caption{P2P Energy Trading System Overview.}\label{fig:system_overview_PT}
\end{figure}

\section{System Model and Problem Formulation}\label{chap:system_prob}
We consider a P2P energy trading system as shown in Fig. \ref{fig:system_overview_PT}. The system consists of prosumers that can exchange energy among each other through an existing distribution network. The grid serves as backup for  prosumers to either buy or sell energy, if the local energy trading is insufficient or not possible. Let $P$ be the set of all prosumers participating in the P2P energy market. We refer to  $B_t \subset P$ as the set of \textit{Buyers}, i.e. the set of prosumers that have higher self-consumption than generation at a timeslot $t$, and consumers without energy generation capabilities. Similarly, $S_t \subset P$ is the set of \textit{Sellers}, i.e., prosumers that have excess generation at a timeslot $t$. For simplicity of notation, we drop the subscript $t$ in the following. 


We model the perceived loss and gain of prosumers using the \textit{prospect theory} (PT) value function to capture user perception on gains and losses. Specifically, consider the excess energy generation of seller $i \in S$ be $r_i$ and demand of buyer $j \in B$ be $w_j$. Then, let $x_{ij} \in [0,1]$ represent the fraction of $w_j$ that a buyer $j$ is willing to buy from  seller $i$ at $\rho_{i}$ price per $kWh$ amount of energy. There is an {\em energy loss} during the physical energy transfer through wires \cite{Zhu2013SmartMicrogrids}, which depends on the wire-length between $i$ and $j$ and directly proportional to the amount of energy exchanged. The loss is modeled as a fraction $l_{ij} \in [0,1]$ of the energy exchanged.
Assume $\rho_{gs},\rho_{gb}$ be the energy selling and purchasing prices from the grid. We adopt a modified PT value function to model realistic user perception in an energy market  \cite{kahneman2013prospect}. The function quantifies  perceived utility of humans towards gain and loss based on degree of deviation from a reference point. Particularly, in our problem, it captures the difference of total actual buying cost $y_{j}$ from the buyer's desired total reference cost $\rho_{j}w_j$ where $\rho_j$ is the {\em reference price} of  buyer $j$ for purchasing energy. This utility function is formulated as   

\begin{equation} \label{eq:PT_val_buyers}
    v(y_j)= 
\begin{cases}
    k_{+,j}(\rho_{j}w_j - y_j)^{\zeta_{+,j}},               & y_j < \rho_{j}w_j\\
    -k_{-,j}(y_j - \rho_{j}w_j)^{\zeta_{-,j}},              & y_j \geq \rho_{j}w_j
\end{cases}
\end{equation}

where $k_{+,.}, k_{-.},\zeta_{+,.}, \zeta_{-,.}$ are the parameters that control the degree of loss-aversion and risk-sensitivity. These parameters are found to be highly heterogeneous and vary from person to person based on factors like gender and age group
\cite{balavz2013testing,rieger2017estimating}.  
$y_j$ is the total actual cost of buying energy for $j^{th}$ buyer s.t.
$$ y_j = \sum_{i \in S} \rho_{i}x_{ij}w_j + \rho_{gs}(1-\sum_i x_{ij})w_j$$
Note that, similar to the  PT value function in \cite{kahneman2013prospect}, the utility function in Eq. \eqref{eq:PT_val_buyers} is  concave in the gain domain (i.e. case $y_j < \rho_{j}w_j$) while convex in loss domain (i.e. case $y_j \geq \rho_{j}w_j$).

The problem of matching demand and production of heterogeneous prosumers is formalized as follows. 
\begin{subequations}\label{obj_func_pt}
\begin{align}
	{\text{maximize}}& &&f(y):\sum_{j \in B} v(y_j) \tag{\ref{obj_func_pt}}\\
 	\mbox{s.t.}& && \sum_{j \in B} (1+l_{ij}) x_{ij}w_j \leq r_i, &&&& \forall i\label{const2_pt}\\
	&&& \sum_{i \in S} x_{ij} \leq 1, &&&& \forall j\label{const3_pt}\\
	&&&  x_{ij} = 0 \text{, if } l_{ij}\geq l_{max},&&&& \forall i\label{const4_pt}\\
	&&& \rho_{gb} \leq \rho_{i},\rho_{j} \leq \rho_{gs},&&&& \forall i\label{const5_pt}\\
	&&& x_{ij} \in [0, 1], &&&&\forall i,j\label{const6_pt}
\end{align}
\end{subequations}

The problem maximizes the sum of perceived utility for buyers in Eq. \eqref{obj_func_pt}. Constraint in Eq. \eqref{const2_pt} prevents the problem from exceeding the amount of energy being sold by each sellers while incorporating the losses in electric lines. The constraint in Eq.  \eqref{const3_pt} ensures that the energy demand for each buyers is not exceeded, while constraint \eqref{const4_pt} limits the loss between sellers and buyers to be within the loss threshold $l_{max}$. Finally, the constraint \eqref{const5_pt} limits upper and lower bound for energy price to the selling and buying price of the grid.




It is to be noted that the problem in Eq. \eqref{obj_func_pt} is non-linear, non-convex 
optimization problem. Hence, we propose a heuristic based on Differential Evolution Algorithm (DEA) \cite{storn1997differential} described in the following section. Additionally, in the above problem, the selling price is considered as a fixed amount for a trading period. However, the reference price $\rho_j$ of buyer $j$ is a personal value which may under- or over-estimate the competitiveness of market. In order to maximize the sellers' perceived objectives through prospect theory, we resort to the risk-sensitive Q-learning algorithm \cite{shen2014risk}.

\begin{algorithm}[!hpbt]
\SetAlgoLined
\footnotesize
\caption{DEbATE}\label{alg:DEBATE}
\footnotesize
\SetKwInOut{Input}{Input}
\SetKwInOut{Output}{Output}
\Input{set of buyers $B$, sellers $S$, fitness function $f(.)$, max iterations $G_{max}$, population size $NP$, crossover probability $CR$, differential weight $F$}
\Output{best identified feasible solution $\mathbf{x^*}$}
Update set of buyers $B$ and sellers $S$, $count = 0$\; 
Generate initial population $\mathcal{X} = \{\mathbf{x_k} |\mbox{ } k = 1, \dots, NP\}$\; 
\While{$count  < G_{max}$}{
    \For{each $\mathbf{x_k} \in \mathcal{X}$}{
        Choose $3$ different vectors $\{\mathbf{x_a}, \mathbf{x_b}, \mathbf{x_c}\}\in \mathcal{X}$  at random and $R \sim U(1,|S| \times |B|)$\;
        Create mutated solution $\mathbf{\Bar{x}_k} = \mathbf{x_k}$\; 
        \tcc{\textbf{Mutation and Crossover}}
        \For{each $i \in |S|$, $j \in |B|$}{
            Select $u \sim U(0,1)$ \;
            \uIf{$u < CR || (i \times j) == R$}{
                $\Bar{x}_{ij}^{(k)} = x_{ij}^{(a)} + F \times (x_{ij}^{(b)} - x_{ij}^{(c)})$\;
                $\Bar x_{ij}^{(k)} = \min(1,\max(0,\Bar x_{ij}^{(k)}))$
            }
            }
        \tcc{\textbf{Check Constraints}}
        $\forall i,j$, \lIf{$l_{ij}\geq l_{max}$}{$\Bar{x}_{ij}=0$}
        $\forall i$, \lIf{$\sum_j (1+l_{ij})\Bar x_{ij}w_j > r_i$}{$\Bar{x}_{ij} = \frac{\Bar{x}_{ij}r_{i}} {\sum_{\hat{j}} \Bar (1+l_{i\hat{j}})\Bar{x}_{i\hat{j}}w_{\hat{j}}} $}
        $\forall j$, \lIf{$\sum_i \Bar x_{ij} > 1$}{ $\Bar{x}_{ij} = \frac{\Bar{x}_{ij}} {\sum_{\hat{i}} \Bar x_{\hat{i}j}} $}
        \tcc{\textbf{Compare fitness}}
        \lIf{$f(\mathbf{\Bar{x}_k}) > f(\mathbf{x_k})$}{$\mathcal{X} = (\mathcal{X} \setminus \{ \mathbf{x_k} \}) \cup \{\mathbf{\Bar{x}_k}\}$}
    }
    count =  count++\;
}
\tcc{\textbf{Find the best solution to execute trading}} 
Let $\mathbf{x^*} = \arg \max\limits_{\mathbf{x_k} \in \mathcal{X}} f(\mathbf{x_k})$\;


Execute transactions for each prosumers to $\mathbf{x^*}$ \;
\end{algorithm}

\section{The DEbATE and PQR Heuristics} \label{chap_fut_heuristic}
In this section, we describe the \textit{Differential Evolution-based Algorithm for Trading Energy (DEbATE)} (Alg. \ref{alg:DEBATE}), designed for the problem presented in Section \ref{chap:system_prob}, and the  \textit{Pricing mechanism with Q-learning and Risk-sensitivity (PQR)}, designed to dynamically adjust the sellers' prices. 

\subsection{DEbATE}
$DEbATE$ is executed at each trading period (e.g., 12 hours) to solve the non-linear optimization problem in Eq. \eqref{obj_func_pt}. It uses differential evolution to determine an optimal amount of energy to be traded between prosumers that maximizes the perceived utility of  buyers.
\textit{DEbATE} initially updates the list of buyers ($B$) and sellers ($S$) based on the expected production and consumption for current trading period. These can be  predicted accurately with recent approaches \cite{kong2017short,casella2022dissecting}. 
The differential evolution-based optimization begins on line $2$ where an {\em initial population} $\mathcal{X}$ is generated with population size of $NP$. An element $\mathbf{x_k} \in \mathcal{X}$, with  $k=1,2,\dots,NP$ is  a {\em candidate solution} vector of variables $x_{ij}$ representing the amount of energy to be traded between each seller $i$ and buyer $j$ . These variables correspond to the decision variables of our optimization problem.

The $while-$loop (line $3-19$) is the differential evolution loop that aims at finding solution to the non-linear optimization problem with Eq. \eqref{obj_func_pt} as the fitness function. The loop is executed for $G_{max}$ iterations. 
At each iteration, for each candidate solution $\mathbf{x}_k \in \mathcal{X}$, the algorithm creates a {\em mutated solution} $\mathbf{\bar{x}_k}$. Initially, $\mathbf{\bar{x}_k} = \mathbf{x_k}$.
The mutated solution is subsequently updated through mutation and crossover with $3$ random candidates $\mathbf{x}_a, \mathbf{x}_b, \mathbf{x}_c \in \mathcal{X}$ (line $5$). A value $R \in [1,|S|\times |B|]$ is selected at random. $R$ will be used in the following $for-$loop to ensure a minimum mutation. The for loop in line $7$ iterates over the components (dimensions in evolutionary terms) of $\mathbf{\bar{x}_k}$. 
During each iteration, a value $u \in [0,1]$ is sampled at random as mutation probability (line $8$). Subsequently, a mutation occurs for the component $ij$ of $\mathbf{\bar{x}_k}$ with crossover probability $CR$ (line $9$). The mutation occurs irrespective of the probability  if $(i \times j) = R$ (to ensure at least one minimum mutation). A mutation is executed by combining the  corresponding component of $\mathbf{x_a}$, $\mathbf{x_b}$, and $\mathbf{x_c}$ with  the differential weight parameter $F \in [0,2]$ as in line $10$.  
The mutated component $\mathbf{\bar{x}_{ij}^{(k)}}$ is clipped to ensure that it falls within $[0,1]$ as minimum and maximum threshold to satisfy constraint Eq. \eqref{const6_pt} in line $11$ of the algorithm. 


After the mutated solution is finalized, it is checked, and adjusted if needed, to meet the constraints in Eqs. \eqref{const2_pt}-\eqref{const4_pt}  of the optimization problem. Specifically, line $13$ ensures that no exchange occurs (i.e., $\mathbf{\bar{x}_{ij}^{(k)}} = 0$) between users having a loss higher than $l_{max}$. Lines $14-15$ ensure that the production of a seller and the demand of each buyer are not exceeded, respectively.
Finally, in line $16$, the fitness function $f(.)$ of the mutated solution $\mathbf{\Bar{x}_k}$ is compared against the original candidate solution $\mathbf{x_k}$. If $f(\mathbf{\Bar{x}_k}) > f(\mathbf{{x}_k}) $, then $\mathbf{\Bar{x}_k}$ replaces $\mathbf{{x}_k}$ in the set of candidate solutions $\mathcal{X}$.
At the end of the while loop, $DEbATE$ selects the best solution $\mathbf{{x}^*}$ in $\mathcal{X}$ (line $20)$ and executes the transactions accordingly (line $21$).
In the following theorem \ref{Theo:complexity}, we show that the $DEbATE$ has polynomial complexity and hence, computationally efficient.

\begin{theorem}\label{Theo:complexity}
The complexity of the $DEbATE$ algorithm is $O(G_{max} \times NP \times |S||B|)$. 
\end{theorem}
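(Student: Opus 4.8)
The plan is to bound the running time by reading off the nested loop structure of Algorithm~\ref{alg:DEBATE} and showing that every block executed once per candidate solution costs at most $O(|S||B|)$. First I would observe that a candidate solution $\mathbf{x}_k$ is a vector with exactly $|S|\times|B|$ components, one per seller--buyer pair. Consequently, generating the initial population on line~2 touches $NP$ such vectors and costs $O(NP \times |S||B|)$, which is already dominated by the claimed bound, and the initial bookkeeping on line~1 is no larger.

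Next I would account for the two explicit outer loops. The $while$-loop (line~3) executes $G_{max}$ times, and nested inside it the $for$-loop (line~4) iterates over all $NP$ candidates. It therefore suffices to show that the body executed for a single candidate $\mathbf{x}_k$ runs in $O(|S||B|)$ time; multiplying the three factors then yields $O(G_{max}\times NP \times |S||B|)$. Within the per-candidate body, copying $\mathbf{x}_k$ into $\mathbf{\bar{x}}_k$ (line~6) and the mutation/crossover $for$-loop (line~7) each visit the $|S||B|$ components with $O(1)$ work apiece, giving $O(|S||B|)$. The constraint-repair steps on lines~13--16 likewise reduce to a constant number of passes over the components: line~14, for instance, computes for each seller $i$ a sum over the $|B|$ buyers, which aggregates to $O(|S||B|)$, and lines~13, 15 are bounded the same way.

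The step I expect to be the main obstacle is bounding the fitness evaluation $f(\mathbf{\bar{x}}_k)$ invoked on line~16, since this is where the problem structure enters rather than pure loop counting. I would expand $f(y)=\sum_{j\in B} v(y_j)$ and note that each total actual cost $y_j = \sum_{i \in S} \rho_{i}x_{ij}w_j + \rho_{gs}(1-\sum_i x_{ij})w_j$ requires $O(|S|)$ arithmetic operations; summing over all $|B|$ buyers therefore costs $O(|S||B|)$, while evaluating the piecewise PT value function of Eq.~\eqref{eq:PT_val_buyers} adds only $O(1)$ per buyer. Since the per-candidate body consists of constantly many blocks, each shown to be $O(|S||B|)$, the body is $O(|S||B|)$, establishing the per-candidate bound. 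Finally I would dispatch the post-loop work: selecting $\mathbf{x}^*$ on line~20 requires at most one fitness evaluation per candidate, hence $O(NP \times |S||B|)$, and executing the transactions on line~21 is $O(|S||B|)$; both are absorbed into the stated bound, completing the proof.
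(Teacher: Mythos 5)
Your proposal is correct and follows essentially the same nested-loop-counting argument as the paper's own proof: $G_{max}$ iterations of the outer loop, times $NP$ candidates, times an $O(|S||B|)$ per-candidate body covering mutation/crossover, constraint repair, and fitness evaluation. Your accounting of the fitness evaluation as $O(|S||B|)$ (since each $y_j$ sums over $|S|$ sellers) is in fact slightly more careful than the paper's stated cost of $|B|$ for line~16, but this does not change the final bound.
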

\begin{proof}
The complexity is dominated by the  $while$ loop (lines $3-19$), which is executed $G_{max}$ times. Within this loop, the $for-$loop (lines $4-17$) does $|\mathcal{X}| =  NP$ total iterations. In each iteration, the inner $for-$loop (lines $7-12$) iterates over the sets $S$ and $B$, and only contains constant operations. Similarly, checking the constraints (lines $13-15$) requires to iterate over the same sets. Finally, calculating the function $f(.)$ (line $16$) has cost $|B|$. Overall, the complexity is $O(G_{max} \times NP \times (|S||B| + 3|S||B| + |B|)) = O(G_{max} \times NP \times |S||B|)$
\end{proof}

\begin{algorithm}[!hpbt]
\SetAlgoLined
\footnotesize
\caption{PQR}\label{alg:PQRs}
\footnotesize
 \tcc{\textbf{Pricing with Risk-sensitive Q-learning}} 
 Collect transaction information for each prosumers from $DEbATE$ (Alg. \ref{alg:DEBATE}) for current timestep $t$\;
 \For{each $i \in S$}{
 Select an action, $a \in \{+\delta,-\delta,0\}$ based on exploration and exploitation \;
 $s=\rho_i;s_{new} = s+a; R_i = (\rho_i+a) \sum\limits_{j \in B} x_{ij}$\;
 Update $Q(s,a)$ as in Eq. \eqref{eq:PT_Q_update}\;
 $\rho_i = s_{new}$\;
 Send information on updated price $\rho_i$ to seller $i$\;
 }
\end{algorithm}

\subsection{PQR}
After determining the solution to the energy allocation problem in $DEbATE$, the selling price for sellers is then updated through the $PQR$ algorithm. In order to learn the optimal selling price dynamically over time, we model the sellers as independent learning agents. Note that, to preserve the privacy and avoid the conflict between prosumers, these agents do not have access to information about other sellers or buyers. The state space in the Q-learning formulation consists of the prices between the grid buying ($\rho_{gb}$) and selling ($\rho_{gs}$), discretized by a step size, $\delta$, i.e., $\rho_i \in \{\rho_{gb},\ \rho_{gb}+\delta,\ \rho_{gb}+2\delta,\ ...,\ \rho_{gb} + \big(\frac{\rho_{gs}-\rho_{gb}}{\delta}-1\big)\delta,\ \rho_{gs}\}.$

The action space consists of a price increasing action, price decreasing action, and no change action, i.e. $a \in \{+\delta,-\delta,0\}$, where $\delta$ is the amount by which price is increased or decreased. Seller $i$ reward function is the total revenue generated at the current trading period i.e. $R_i = (\rho_{i}+a)\sum_{j \in B} x_{ij}w_j$. For updating Q-values, we modify the approach proposed in \cite{shen2014risk} by considering the following Q-learning update rule that includes the PT-based perceived utility of sellers.

\begin{equation} \label{eq:PT_Q_update}
    Q^{(new)}(s,a) = Q^{(old)}(s,a) +\alpha v(y_i)
\end{equation}

\begin{equation} \label{eq:PT_val_sellers}
    v(y_i)= 
\begin{cases}
    k_{+,i}(y_i)^{\zeta_{+,i}},               & y_i > 0\\
    -k_{-,i}(-y_i)^{\zeta_{-,i}},              & y_i \leq 0
\end{cases}
\end{equation}

where, $y_i = R_i + \gamma \max_a Q(s_{new},a) - Q(s,a)$ is the Temporal Difference (TD) error of $i^{th}$ seller for current iteration, and $v(y_i)$ is transformation of TD error to capture each seller's personalized perceived utility on loss and gain. $\alpha$ refers to the learning rate for updating Q-values in Eq. \eqref{eq:PT_Q_update}.
The action is selected based on an \textit{$\epsilon$-greedy} exploration-exploitation strategy \cite{sutton2018reinforcement}. Specifically, $\epsilon$ refers to the probability of exploration and it is initially set to $1$. It is then decreased over time using an \textit{$\epsilon-$decay} value, as the system learns the optimal policy.
Based on the selected action, the new selling price, reward, and Q-value are updated as per Eqs. \eqref{eq:PT_Q_update} and \eqref{eq:PT_val_sellers}. Updated selling price is then sent to the respective seller $i$ for next trading period. 



The system runs both $DEbATE$ and $PQR$ sequentially at every trading period. Input of $DEbATE$ is updated based on the prices computed by $PQR$. $PQR$ then takes as input the reward from executing energy transactions by $DEbATE$.

\section{Experimental Results}

 \subsection{Experimental Setup}
 
  \begin{figure}[htbp]
      \centering
      \includegraphics[width=0.5\linewidth]{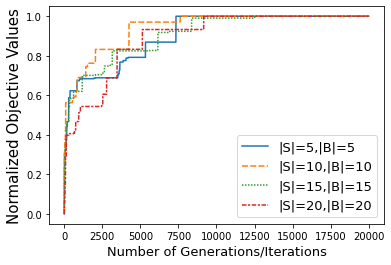}
      \caption{Normalized objective value vs. number of iterations.}
      \label{fig:de_obj_val}
  \end{figure}

 \begin{figure*}[!th]
\minipage{0.33\linewidth}
  \includegraphics[width=.99\linewidth]{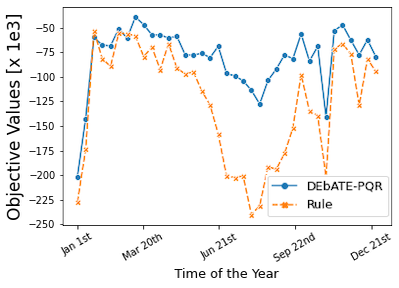}%
  \caption{Buyers' perceived values.}\label{fig:obj_val_PT}
\endminipage\hfill
\minipage{0.33\linewidth}%
 \includegraphics[width=.99\linewidth]{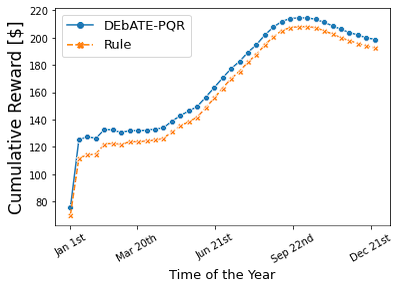}
 \caption{Sellers' cumulative reward.}\label{fig:cum_reward_PT}
\endminipage\hfill
\minipage{0.33\linewidth}%
 \includegraphics[width=0.99\linewidth]{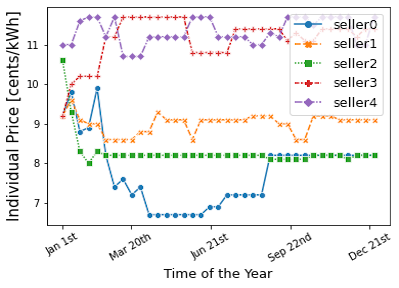}
 \caption{Individual prices.}\label{fig:ind_price_PT}
\endminipage
\end{figure*}

In this section, we evaluate the performance of \textit{DEbATE} and \textit{PQR}, hereafter jointly referred as $DEbATE-PQR$, against a recent state-of-the-art approach referred to as \textit{Rule} \cite{azim2019feasibility}. $Rule$ allocates energy using a greedy heuristic that assigns cheapest sellers to buyers based on their registration order in the system, while final price of each transaction follows mid-market pricing, i.e., mid value of seller's and buyer's asking price. We consider a system with  $40$ prosumers, split evenly as buyers and sellers. This is considered  a representative number of prosumers in a microgrid or set of houses supplied by a single distribution transformer. We use a realistic dataset for buyers' energy consumption obtained from \cite{Energy_dataset}. Similarly, we consider sellers equipped with $4kW$ rooftop solar located in Lexington, Kentucky, USA.
The energy generated is estimated using NREL's PVWatts Calculator \cite{Solar_dataset} given the solar irradiance in Lexington and size of solar panels. 
Losses are assigned uniformly at random from set $\{1\%, 2\%, 3\%, 4\%\}$ and maximum loss threshold $L_{max} = 2.5\%$. 

We assume that prosumers complete a survey before joining the system to estimate their individual prospect theory parameters, similar to \cite{rieger2017estimating,balavz2013testing,fox2009prospect}, and use realistic prospect theory parameters determined by them. 
Specifically, we sample the risk-averting parameter for gains $(\zeta_+) \in [0.60,0.88]$, the risk-seeking parameter for losses $(\zeta_-) \in [0.52, 1.0]$,  the loss-aversion parameters for gain and loss $(k_+),(k_-) \in [2.10,2.61]$ for each individual prosumers. The grid energy buying price is set to $\rho_{gb} = \$ 0.06$ and the selling price to $\rho_{gs} = \$ 0.12$. The reference price for each sellers is initially randomly sampled from range $[0.09, 0.12]$. It is then updated using $PQR$ at each iteration. The reference price for each buyer is selected in the range $[0.06,0.10]$ and considered static for the duration of experiments, which is $365$ days. The parameters for $PQR$ algorithm are set as follows: learning rate $\alpha = 10^{-4}$, step size for discretizing state space $\delta =\$ 0.001$, and $\epsilon-$decay $=0.965$.   
 

 \subsection{Results}
 We consider several experimental scenarios and performance metrics, as discussed in the following.
 


\textbf{Experimental Scenario 1:}
We first run experiments to study the convergence of \textit{DEbATE}. We considered different system size by scaling the number of sellers and buyers.   Fig. \ref{fig:de_obj_val} shows the normalized objective value as a function of the number of iterations using a population size $NP=20$. The plot averaged over 10 runs shows that $10,000$ iterations are sufficient for the algorithm to converge in the considered settings. As a result, in the following scenarios we set  $G_{max} = 10,000$ and the population size $NP=20$.

\textbf{Experimental Scenario 2:} In the second experimental scenario we study the performance of the considered approaches over time.  Two performance metrics are considered, namely the buyers' objective value  and the sellers' cumulative reward. These are represented in Figs. \ref{fig:obj_val_PT} and \ref{fig:cum_reward_PT}, respectively, with a moving average of $10$ days. In this experiments we consider $15$ buyers and $15$ sellers.  The benefits of $DEbATE-PQR$ over $Rule$ are more prominent from April through October, when the energy demand and production is higher. The greedy nature of $Rule$ penalizes the quality of the resulting matching, significantly reducing the buyers' perceived value.  Note that, the buyers' objective values are negative because they are paying higher prices than their reference purchase price. Therefore, transactions are seen as loss from a prospect theory perspective. Nevertheless, our approach optimizes the energy assignment to maximize the buyers perceived value. Additionally, our approach is able to generate higher rewards than $Rule$ by dynamically learning the prices for sellers through the $PQR$ algorithm.  The the sellers' reward decreases after mid-september for both the approaches due to the reduced energy production during winter.


   \begin{figure*}[tbhp]
\minipage{0.5\linewidth}
  \includegraphics[width=.99\linewidth]{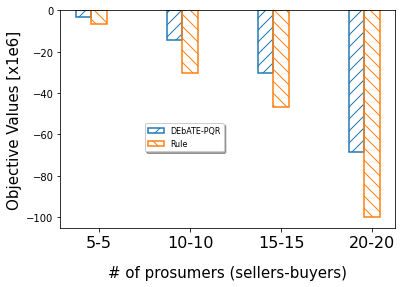}%
  \caption{Obj. values for buyer vs. network size.}\label{fig:obj_size_PT}
\endminipage\hfill
\minipage{0.5\linewidth}%
 \includegraphics[width=.99\linewidth]{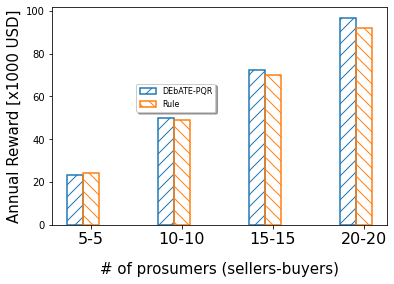}
 \caption{Total rewards for sellers vs. network size.}\label{fig:tot_reward_PT}
\endminipage
\end{figure*}
 
We further study the performance over time by considering the evolution of average and individual sellers' prices. We consider a smaller system of $5$ sellers and $5$ buyers for ease of representation of the results.  
Fig. \ref{fig:ind_price_PT} shows the individual prices. $DEbATE-PQR$ is able to learn and adjust the price over time to improve the buyers' perceived value considering their competitiveness. The competitiveness is a function of a buyer's reference price, their production, and their location in the system (e.g., loss w.r.t. sellers). As a result, our approach is able to improve the perception of both buyers and sellers while ensuring the competitiveness of the market.



\textbf{Experimental Scenario 3:} In this scenario we test the scalability with respect to the system size. Specifically, we increase the system  proportionately from $5$ sellers and $5$ buyers to $20$ sellers and $20$ buyers.
Figs. \ref{fig:obj_size_PT}-\ref{fig:tot_reward_PT} show the buyers' total perceived value and the sellers' reward, respectively, over a year. 
By considering the loss-averse and risk-seeking PT-value functions, $DEbATE-PQR$ achieves an increasing advantage as the system size increases compared to  $Rule$, for both sellers and buyers. As a numerical example,  $DEbATE-PQR$ achieves as much as $26\%$ increase in buyers' perceived value while ensuring $7\%$ profit improvement for sellers.

\section{Concluding Remarks} 
In this paper, 
we bring together the concept of perceived utility from behavioral economics and reinforcement learning into the P2P energy trading scene. Unlike existing literature, 
we propose an automated and dynamic P2P energy trading problem that maximizes the perceived value for buyers while simultaneously learning the optimal selling price. Given the non-linear and non-convex nature of the problem, we propose a novel differential evolution-based metaheuristic algorithm, called $DEbATE$. $DEbATE$ is paired with a prospect theory enhanced Q-learning algorithm, called $PQR$, to adjust the selling price over time. Results show the advantages of the proposed approaches with respect to a state of the art solution using real energy consumption and production data.

\section*{Acknowledgment} 
This work is 
supported by the NSF grant EPCN-1936131 and NSF CAREER grant CPS-1943035.

\appendices

\bibliographystyle{IEEEtran}
\bibliography{IEEEabrv,bibliography}

\end{document}